\let\polishl\l
\newcommand{\Wb}{\boldsymbol{W}}
\renewcommand{\P}{\boldsymbol{P}}
\newcommand{\budget}{B}
\newcommand{\eigen}{\lambda}
\newcommand{\Eigen}{\lambda}
\DeclareMathOperator*{\argmin}{argmin}
\DeclareMathOperator*{\argmax}{argmax}
\newcommand{\x}{\boldsymbol{{x}}}
\definecolor{blue}{rgb}{0,0,0.8}
\definecolor{magenta}{cmyk}{0,1,0,0}
\definecolor{lightgrey}{rgb}{0.5,0.5,0.5}
\definecolor{DarkGreen}{rgb}{0.0, 0.5, 0.0}
\newcommand{\wh}{\hat{\w}}
\renewcommand{\v}{{\boldsymbol{v}}}
\newcommand{\ssize}{\scriptsize}
\renewcommand{\S}{{\mathbb{S}}}
\def\smallf #1{\text{\ssize  {#1}}}
\newcommand{\X}{{\boldsymbol{X}}}
\renewcommand{\L}{{\mathbb{L}}}
\newcommand{\R}{{\mathbb{R}}}
\newcommand{\Z}{{\boldsymbol{Z}}}
\newcommand{\Y}{{\boldsymbol{Y}}}
\newcommand{\w}{{\boldsymbol{w}}}
\renewcommand{\d}{{\boldsymbol{d}}}
\renewcommand{\l}{\ell}
\renewcommand{\ll}{{\boldsymbol{\ell}}}
\newcommand{\y}{\boldsymbol{y}}
\newcommand{\C}{\boldsymbol{C}}
\newcommand{\E}[2][]{\mathbb{E}_{#1}\left[\;#2\;\right]}
\newcommand{\I}{{\boldsymbol{I}}}
\newcommand{\yh}{\widehat{\y}}
\renewcommand{\a}{\boldsymbol{a}}
\newcommand{\e}{\boldsymbol{e}}
\newcommand{\tr}{{\mathrm{tr}}}
\newcommand{\half}{{\mathsmaller{\frac{1}{2}}}}
\newcommand{\Yh}{\widehat{\Y}}
\newcommand{\keywords}[1]{\par\addvspace\baselineskip
\noindent\keywordname\enspace\ignorespaces#1}
\begin{document}
\pagestyle{plain}

\title{Online PCA with Optimal Regrets \thanks{The first and the third authors were supported by the NSF grant
 IIS-0917397. 
The second author was supported by the Fundation for Polish Science under the Homing Plus Program, co-financed by the European Regional Development Fund.}
}

\author{Jiazhong Nie\inst{1}
\and Wojciech Kot{\polishl}owski\inst{2}
\and Manfred K. Warmuth\inst{1}}

\institute{Department of Computer Science, University of California,
Santa Cruz, CA 95064\\
\email{niejiazhong|manfred@cse.ucsc.edu} \and
Institute of Computing Science, Pozna{\'n} University of Technology, Poland\\
v\email{wkotlowski@cs.put.poznan.pl}
}
\maketitle
\begin{abstract}
We carefully investigate the online version of PCA, where in each trial
a learning algorithm plays a $k$-dimensional subspace, and
suffers the compression loss on the next instance when
projected into the chosen subspace. 
In this setting, we give regret bounds for 
two popular online algorithms,
Gradient Descent (GD) and Matrix Exponentiated Gradient (MEG). 
We show that both algorithms are essentially optimal in the worst-case
when the regret is expressed as a function of the number of trials. 
This comes as a surprise, since MEG is commonly believed 
to perform sub-optimally when the instances are sparse. 
This different behavior of MEG 
for PCA is mainly related to the non-negativity of the loss
in this case, which makes the PCA setting qualitatively different
from other settings studied in the literature. 
Furthermore, we show that when considering regret bounds as a function of
a loss budget, MEG remains optimal and strictly outperforms GD.

Next, we study a generalization of the online PCA problem, 
in which the Nature is allowed to play with dense instances, which are positive matrices with bounded largest eigenvalue. 
Again we can show that MEG is optimal and strictly better than GD in this setting.

\keywords{Online learning, regret bounds, expert setting,
$k$-sets, PCA, Gradient Descent and Matrix Exponentiated
Gradient algorithms.}
\end{abstract}
\section{Introduction}
In \underline{P}rincipal \underline{C}omponent \underline{A}nalysis
(PCA), the $n$-dimensional data is projected / compressed onto a $k$-dimensional subspace
so that the total quadratic compression loss is minimized.
The problem of (centered) PCA is equivalent to finding
the eigenvectors of the $k$ largest eigenvalues of the covariance matrix. 
Here the data points $\x_t$ are arbitrary unit vectors in $\R^n$ and the
instances of the PCA problem are the outer products
$\x_t\x_t^\top$. The covariance matrix $\C = \sum_t \x_t\x_t^\top$ is the sum of the instances.

In this paper we consider the online version of centered PCA
\cite{pca}, where in each trial $t=1,\ldots,T$,
the algorithm chooses (based on the previously observed points $\x_1,\ldots,\x_{t-1}$)
a subspace of dimension $k$ described by a projection matrix $\P_t$ of rank $k$. 
Then a next point $\x_t$ (or instance $\x_t\x_t^\top$) is revealed 
and the algorithm incurs the ``compression loss'':
\begin{equation}
\label{e:loss}
\|\x_t - \P_{t} \x_t\|_2^2\;\;=\;\;\tr((\I-\P_t)\x_t\x_t^\top).
\end{equation}
The goal is to obtain an online algorithm 
whose cumulative loss over trials $t=1,\ldots,T$ is close to the cumulative loss of the best rank $k$ projection matrix chosen in hindsight after seeing all $T$ instances. 
The difference between the cumulative losses of the algorithm 
and the best off-line comparator is called the \emph{regret}. 

There are two main families of algorithms in online learning:
The Gradient Descent (GD)\cite{quadratic,gd} family
which is based on regularizing with the squared Euclidean distance,
and the Exponentiated Gradient (EG)\cite{eg} family
which use the relative entropy as their regularization.
The first family leads to \emph{additive updates} of the
parameter vector/matrix. When there are no constraints on
the parameter space, then the parameter vector/matrix is a linear
combination of the instances. 
However when there are constraints, then after the update 
the parameter is projected onto the constraints (via a Bregman
projection w.r.t. the squared Euclidean distance). 
As we shall discuss in the conclusions (Section \ref{sec:conclusion}),
projections w.r.t. inequality constraints introduce all
kinds of subtle problems for GD.
The second family leads to \emph{multiplicative update}
algorithms.
For that family the non-negativity constraints on the
parameters are already enforced and less projections are needed.

In \cite{pca}, a matrix version of the multiplicative update
was applied to PCA, whose regret bound is logarithmic in the
dimension $n$.
This algorithm is based on regularizing with the quantum relative entropy 
and is called the {\em Matrix Exponentiated Gradient} (MEG) algorithm
\cite{meg}. 
Beginning with some of the early work on
linear regression \cite{eg}, it is known that multiplicative updates are especially
useful when the instances are \emph{dense}.
In the matrix context this means that the symmetric
positive semi-definite instance matrix
$\X_t\in\R^{n\times n}$ processed at trial $t$ has maximum eigenvalue of say one.
However in the PCA context, the instance matrices are
the outer products, i.e.  $\X_t=\x_t \x_t^\top$.
Such instances (also called \emph{dyads}) are \emph sparse in the sense that their trace
norm is one, independent of the dimension $n$ of the instance matrix. 
Thus, one may suspect that MEG is not able to fully exploit the sparsity of the instance
matrices. 
On the other hand for linear regression, GD is known to have the advantage
when the instances are sparse and consistently with that,
when GD is used for PCA, then 
its regret is bounded by a term that is \emph{independent} of
the dimension of the instances. 
The advantage of GD in the sparse case is also
supported by a general survey of Mirror Descent algorithms
(to which GD and MEG belong) for the case when the loss
vectors (which have negative components)
lie in certain symmetric norm balls \cite{md}.

Surprisingly, the situation is quite different for PCA: We 
show that MEG achieves the same regret bound as GD for
online PCA 
(despite the sparseness of the instance matrices)
and the regret bounds for both algorithms
are within a constant factor of our new lower bound that
holds for any online PCA algorithm.
This surprising performance of MEG comes from the fact
that the losses in the PCA case are restricted to be non-negative, 
and therefore our results are qualitatively different
from the cases studied in \cite{md} where loss vectors
are within a $p-$norm ball, i.e. symmetric around zero. 

Actually, there are two kinds of regret bounds in the
literature: bounds expressed as a function of the time horizon $T$
and bounds that depend on an {\em upper bound} on the loss
of the best comparator (which we call a {\em loss budget}
following \cite{jakeminimax}).
In typical applications for PCA, there exists a low dimensional subspace
which captures most of the variance in the data
and guarding against the worst-case loss that grows with
$T$ is not useful.
We can show that when considering regret bounds as a function
of a loss budget, MEG is optimal and strictly better than GD
by a factor of $\sqrt{k}$.
This suggests that the multiplicative updates algorithm is the best
choice for prediction problems, 
in which the parameters are mixture of projection matrices 
and the losses are non-negative.
Note that by upper bounds on the regret, we mean upper
bounds for particular algorithms. However, the matching
lower bounds are always proved against any algorithm that
solves the problem.

\subsubsection*{Related work and our contribution:}
The comparison of the GD and MEG algorithms has quite an
extensively history (see, e.g. \cite{eg,span,mddual,md}). 
It is simplest to compare algorithms in the case
when the loss is linear. Linear losses are the least convex
losses and in the regret bounds,
convex losses are often approximated by first-order
Taylor approximations \cite{gd} which are linear,
and the gradient of the loss functions as the loss vector. 
Note that in this case the assumptions on the gradient of the loss
are typically symmetric.

In the case when the parameter space and the space of loss
vectors are convex and symmetric, the regret bounds are as
expected: EG is optimal when the parameter space is 1-norm
bounded and the loss vectors are infinity norm bounded,
and GD is optimal when the both spaces are 2-norm bounded
\cite{mddual,md}.
However, none of the previous work exploits the special PCA setup,
where the loss matrices (here the instances) are non-negative
and sparse (see \eqref{e:loss}). In this paper we carefully study this case.

We also made significant technical progress on the lower
bounds. Previous lower bounds focused on the 
non-sparse case \cite{pca,chedge}. Lower bounds were proved as a function of a 
loss budget.
In this paper we prove lower bounds as a function of time. 
These lower bounds harbor the budget case as a special case.

For the time dependent case, lower bounds were previously
shown for the expert setting \cite{book,advice,sto}.
However, these lower bounds rely on
the Central Limit Theorem and only hold in the limit
(as $T,n\rightarrow \infty$). 
In contrast our lower bounds use a probabilistic bounding
argument for the minimum of $n$ random variables and the
resulting bounds are non-asymptotic.

In summary, our contribution consists of proving tighter regret bounds
for two online PCA algorithms, as well as
proving lower bounds on the regret of any algorithm
for online PCA. From that we get the following conclusions:
MEG's and GD's regret bounds are independent of the dimension $n$ of the
problem and are tight within a constant factor
when the time horizon $T$ is fixed, which implies that both
algorithms are essentially optimal in this case.
If we fix the loss budget instead, MEG remains optimal,
while GD is proved to be suboptimal.

Furthermore, for a generalization of the PCA setting to the dense
instance case, we improve the known regret bound significantly 
by switching from a loss version to a gain version of MEG.
It turns out that MEG is optimal in the dense setting as
well, whereas GD is not.

\subsubsection*{Outline of the paper:}
In Section \ref{sec:online} we describe the MEG and GD
algorithms and prove regret bounds for them.
In Section \ref{sec:lower_bounds} we prove
lower bounds for both the sparse and the dense setting.
We conclude with an open problem about the
Incremental Off-line version of GD.

\section{The online algorithms}
\label{sec:online}

The GD and MEG algorithms are both
examples of the \emph{Mirror Descent} algorithm \cite{md}. 
Mirror Descent updates its parameters by minimizing a trade-off between a 
divergence to the parameter at the end of the last
trial and the loss on the current single instance,
followed by a projection into the parameter set. 
The divergence is always a Bregman divergence.  
In Machine Learning these updates were discovered in
\cite{eg,hw}. If we choose the quantum relative entropy
as the Bregman divergence, 
then we get the matrix version of a multiplicative update algorithm known as 
\emph{Matrix Exponentiated Gradient} algorithm (here
denoted as MEG). 
Similarly, the squared Frobenius norm results in an additive update algorithm 
known as \emph{Gradient Descent} (GD).\footnote{We
avoided the name ``Matrix'' Gradient Descent, since
Gradient Descent updates are motivated by regularizing with
the squared Euclidean distance and the Frobenius norm of a
matrix is simply the Euclidean norm of ``vectorized'' matrix.}

{\bf Sparse and dense instances:}
We call a symmetric positive semi-definite matrix 
{\em sparse} iff its trace norm (sum of the eigenvalues) 
is at most one.
Note that the instance matrices in our online PCA setup are sparse 
since they are outer products of unit vectors.
We also generalize our subspace learning problem to
{\em dense} instance matrices, which are symmetric positive semi-definite matrices
with maximum eigenvalues at most one. 

\subsection{The MEG algorithms}
\label{sec:eg}
In the online PCA problem, the algorithm predicts at trial
$t$ with a projection matrix $\P_t$ of rank $k$ and incurs
the compression loss $\|\x_t - \P_{t} \x_t\|_2^2$ 
upon receiving the next point $\x_t$. 
This loss is equivalent to the linear loss $\tr\left((\I - \P_t) \X_t \right)$, 
where $\X_t=\x_t \x_t^{\top}$ is the instance matrix.
Actually, $\I-\P_t$ is a complementary projection matrix
which has rank $m=n-k$. Since 
$$\tr\left((\I - \P_t) \X_t \right) 
= \tr\left(\X_t \right) - \tr\left(\P_t\X_t\right),$$
there are always two versions of
the algorithms: one that produces projection matrices
of rank $m=n-k$ and minimizes the compression loss
$\tr\left((\I - \P_t) \X_t \right)$
and one that produces projection matrices of rank $k$
and maximizes the gain $\tr\left(\P_t \X_t \right)$
(or minimizes $-\tr\left(\P_t \X_t\right)$).
As we shall see, for the MEG algorithm the loss and the
gain versions  (referred to as \emph{Loss MEG} and 
\emph{Gain MEG} throughout the paper) are different, whereas for GD, both versions collapse
to the same algorithm. 

We allow the algorithms to choose their projection matrix at
random. Thus the algorithms maintain a mixture of projection matrices 
of rank $k$ or $m=n-k$, respectively, as their parameter
matrix $\Wb_t$. These mixtures are 
\emph{generalized density matrices}, which are symmetric, positive definite matrices 
with eigenvalues upper bounded by $1$, and trace equal to
$k$ or $m$, respectively \cite{pca}.
We use $\mathcal{W}_k$ and $\mathcal{W}_m$ to denote the
parameter space of all such matrices, respectively.
Now we define the update of the Loss MEG and Gain MEG as follows:
\begin{align*}
&\text{Loss MEG:} \quad \Wb_{t+1} 
= \argmin_{\Wb \in \mathcal{W}_m} \;\left( \Delta(\Wb, \Wb_t)
+ \eta \: \tr(\Wb \X_{t}) \right),\\
&\text{Gain MEG:} \quad \Wb_{t+1} 
= \argmin_{\Wb \in \mathcal{W}_k} \;\left( \Delta(\Wb, \Wb_t) 
- \eta\: \tr(\Wb \X_{t}) \right),
\end{align*}
where $\Delta(\Wb,\Wb')=\tr\left(\Wb(\log \Wb - \log \Wb' \right)$ 
is the quantum relative entropy, and $\eta>0$ is a learning rate.
Also $\tr(\Wb \X_{t})$ for $\Wb\in \mathcal{W}_m$ is the
expected loss in trial $t$ 
of the random projection matrix of rank $m$ drawn 
from the mixture summarized by $\Wb\in \mathcal{W}_m$.
Similarly, $\tr(\Wb \X_{t})$ for $\Wb\in \mathcal{W}_k$ is
the expected gain in trial $t$ 
of the random projection matrix of rank $k$ drawn from the mixture summarized by $\Wb \in \mathcal{W}_k$,
Note that the loss version of MEG corresponds to the
original MEG algorithm 
developed in \cite{pca}, where it was shown to have the
following regret bound:
\begin{equation}
 \mathrm{regret}_{\mathrm{Loss}\;\mathrm{MEG}} \leq \sqrt{2 \budget m \log \frac{n}{m} } + m \log \frac{n}{m}.
 \label{eq:regret_bound_B_MEG}
\end{equation}
This bound holds for any sequence of instance matrices
(dense as well as sparse)
for which the total compression loss of the best rank $k$ subspace 
does not exceed the loss budget $\budget$.
With a similar analysis, 
the regret of Gain MEG can be bounded by
\begin{equation*}
 \mathrm{regret}_{\mathrm{Gain}\;\mathrm{MEG}} \leq \sqrt{2 kG \log \frac{n}{k} } .
\end{equation*}
This bound holds for any sequence of instance matrices (dense as well as sparse)
for which the total gain of the best rank $k$ subspace 
does not exceed the gain budget $G$.

Budget dependent upper bounds on the regret always lead to time
dependent regret bounds (as exploited in the proof of the below Theorem). 
Note that for PCA, the gain bound $G$ is usually much larger 
than the loss bound $B$ and therefore Gain MEG is not that
useful for PCA. However as we shall see for dense
instances, Gain MEG is sometimes better than Loss MEG.
Incidentally, for lower bounds on the regret the
implication is reversed in that time
dependent regret bounds imply budget dependent regret bounds.

\begin{theorem}
\label{th:T_MEG}
For sparse instance sequences of length $T$, the regret of the
Loss MEG and Gain MEG algorithms is upper bounded by:
\begin{align}
 \label{eq:regret_bound_T_MEG}&
\mathrm{regret}_{\mathrm{Loss}\;\mathrm{MEG}}  \;\leq\; m\sqrt{ \frac{2T}{n} \log \frac{n}{m} } + m \log \frac{n}{m}\;\leq\;\sqrt{ \frac{2kmT}{n} } + k \\[0.2cm]
& \mathrm{regret}_{\mathrm{Gain}\;\mathrm{MEG}}  \leq\sqrt{ 2kT \log \frac{n}{k} }. \nonumber 
\intertext{Similarly, for dense instances, the following
regret bounds hold:}  
& \mathrm{regret}_{\mathrm{Loss}\;\mathrm{MEG}}  \leq m\sqrt{ 2T \log \frac{n}{m} } + m \log \frac{n}{m}\nonumber \\
& \mathrm{regret}_{\mathrm{Gain}\;\mathrm{MEG}}  \leq k\sqrt{ 2T \log
\frac{n}{k} }.  \nonumber
\end{align}
\end{theorem}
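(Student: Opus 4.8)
The plan is to derive all four bounds from the already-established budget-dependent bounds, namely \eqref{eq:regret_bound_B_MEG} for Loss MEG and its Gain MEG analogue, by bounding the relevant budget in terms of $T$. The key observation is that each budget bound is stated for a learning rate $\eta$ tuned to a known budget and holds for \emph{every} instance sequence whose comparator loss (resp. gain) does not exceed that budget. Hence it suffices to exhibit, for every instance sequence of length $T$, an a priori upper bound on the loss (resp. gain) of the best rank-$k$ comparator; setting the budget to this worst-case value and tuning $\eta$ accordingly then yields a bound valid uniformly over all length-$T$ sequences.

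First I would express the comparator values spectrally. Writing $\C=\sum_{t=1}^T\X_t$ and letting $\Eigen_1\ge\cdots\ge\Eigen_n\ge 0$ be its eigenvalues, the loss of the best rank-$k$ projection is $\min_{\P}\tr((\I-\P)\C)=\sum_{i=k+1}^n\Eigen_i$, the sum of the $m$ smallest eigenvalues, while the gain of the best rank-$k$ projection is $\max_{\P}\tr(\P\C)=\sum_{i=1}^k\Eigen_i$, the sum of the $k$ largest. So I only need upper bounds on these two spectral sums in each regime.

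Second, I would bound the spectral sums. For \textbf{sparse} instances each $\X_t=\x_t\x_t^\top$ has trace $1$, so $\tr(\C)=\sum_i\Eigen_i=T$; since the $m$ smallest eigenvalues have average at most the overall average $T/n$, the loss budget obeys $\sum_{i=k+1}^n\Eigen_i\le \frac{m}{n}T$, and by non-negativity the gain budget obeys $\sum_{i=1}^k\Eigen_i\le T$. For \textbf{dense} instances each $\X_t$ has largest eigenvalue at most $1$, so by subadditivity of the largest eigenvalue every $\Eigen_i\le\Eigen_1\le T$; hence the loss budget is at most $mT$ and the gain budget at most $kT$. Substituting these four budget values into \eqref{eq:regret_bound_B_MEG} and into the Gain MEG budget bound immediately produces the four displayed inequalities in their leading (unsimplified) form.

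Finally I would carry out the one elementary simplification, on the sparse loss bound, showing $m\sqrt{\tfrac{2T}{n}\log\tfrac{n}{m}}+m\log\tfrac{n}{m}\le\sqrt{\tfrac{2kmT}{n}}+k$; both pieces follow from the single inequality $m\log\tfrac{n}{m}\le k$, which in turn is $\log\tfrac{n}{m}=\log(1+\tfrac{k}{m})\le\tfrac{k}{m}$. The argument is not technically deep; the only steps worth stating explicitly are the spectral characterization of the comparators and the bound $\sum_{i=k+1}^n\Eigen_i\le\frac{m}{n}\tr(\C)$ (the average of the smallest $m$ eigenvalues is at most the overall average). I expect the main point to guard carefully is the reduction itself: because the budget bounds require $\eta$ tuned to a fixed budget, I must set the budget to the worst-case comparator value over all length-$T$ sequences computed above, so that every length-$T$ sequence is indeed covered by the bound.
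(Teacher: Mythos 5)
Your proposal is correct and follows essentially the same route as the paper's own proof: both reduce the time-dependent bounds to the budget-dependent bounds \eqref{eq:regret_bound_B_MEG} (and its Gain MEG analogue) by bounding the comparator's loss via the spectrum of $\C$ (giving budgets $\frac{m}{n}T$, $T$, $mT$, $kT$ in the four cases) and then simplify the sparse loss bound using $m\log\frac{n}{m}=m\log(1+\frac{k}{m})\le k$. The only cosmetic difference is that for dense instances you bound the eigenvalues of $\C$ via subadditivity of the largest eigenvalue, whereas the paper uses $\tr(\C)\le Tn$ and the same averaging argument as in the sparse case; both yield the identical budget $mT$.
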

\begin{proof}
The best rank $k$ subspace picks $k$ eigendirections of the covariance matrix $\C =\sum_{t=1}^T \X_t$ with the largest eigenvalues. 
Hence the total compression loss equals the sum of the smallest $m$ eigenvalues of $\C$. 
If $\Eigen_1,\ldots,\Eigen_n$ denote the eigenvalues of $\C$ then:
\[
\sum_{i=1}^n \Eigen_i = \tr(\C)= \sum_{t=1}^T \tr\left(\X_t\right) \leq \left\{\begin{array}{cc} \; T \quad &\text{for sparse instances,}\\[0.2cm] \; Tn \quad&\text{for dense instances.}\end{array}\right.
\]
This implies that the total compression loss of the
comparator is upper bounded by $\frac{T m}{n}$ and $Tm$, respectively. 
Plugging these values into \eqref{eq:regret_bound_B_MEG}
results in the bounds for Loss MEG.
The second inequality in \eqref{eq:regret_bound_T_MEG} follows from 
$$m \log \frac{n}{m} = m \log\left(1 + \frac{k}{m}\right) \leq k.$$
For the regret bounds of Gain MEG, 
we use the fact that $G$ is upper bounded by $T$ 
when instances are sparse and upper bounded by $kT$ when the instances are dense.\qed
\end{proof}
Note that in light of previous results for MEG, it is
actually surprising that the regret bound
\eqref{eq:regret_bound_T_MEG} 
for Loss MEG with sparse instances is independent of the dimension $n$ of the problem.

We now discuss in detail which version of MEG has a better regret
bound for the dense instance case.
We claim that this depends on the value of $k$, the
dimension of the chosen subspace.
Consider the ratio of the regret bounds of Loss MEG over
Gain MEG.
When $T\geq k$, then we can ignore the $m\log \frac{n}{m}$ term
in the Loss MEG bound since this term is at most $k$.
In this case the ratio becomes:
\[ \Theta\left(\sqrt{\frac{k^2}{m^2}\frac{\ln \frac{n}{k} }{\ln \frac{n}{m}}}\right).\]
When $k \leq \frac{n}{2}$, $\ln \frac{n}{m} =
\ln(1+\frac{k}{m}) = \Theta(\frac{k}{m})$, and the ratio simplifies to
$\Theta\left(\sqrt{\frac{ \ln\frac{n}{k}}{
\frac{n}{k}}}\right)$.
Therefore, when $\frac{n}{k}$ grows,
the regret bound for the Loss MEG is less than the regret
bound for the Gain MEG.
Similarly, when $k \geq \frac{n}{2}$, the ratio becomes 
$\Theta\left(\sqrt{\frac{ \frac{n}{m}}{ \ln\frac{n}{m}}}\right)$ 
and the regret bound for the Gain MEG is better in this case.
\subsection{The GD algorithm}
\label{sec:GD}
In this section we consider the GD algorithm (see e.g.\
\cite{gd1,gd}) which is motivated by the squared Frobenius norm
(The loss and gain versions are the same in this case and
we use the loss version below):
\[
\Wb_{t+1} = \argmin_{\Wb \in \mathcal{W}_m}  
\;\:\left( \half\|\Wb -\Wb_t\|_F^2  + \eta\: \tr(\Wb \X_{t})\right).
\]
This algorithm is simple and a time dependent regret
bound has been proved for arbitrary convex losses \cite{gd,md}.
By applying this bound to PCA we obtain:
\[\mathrm{regret}_{GD} \leq \left(\!\max_{1\leq t \leq T}
 \|\X_t\|_F\right)  \sqrt{T \|\Wb_1-\Wb_c\|_F^2}
=  \left(\!\max_{1\leq t \leq T}
\|\X_t\|_F\right)\sqrt{\frac{mkT}{n}},
\]
where $\Wb_c$ is any comparator in the parameter space $\mathcal{W}_m$.
For sparse instances, $\|\X_t\|_F = \sqrt{\tr(\X_t\X_t^\top)}\!\leq\!1$, 
the regret is bounded by $\sqrt{mkT/n}=\sqrt{(n-k)k T/n}$.
This is the same as the regret bound for Loss MEG 
\eqref{eq:regret_bound_T_MEG} except for an additional $\sqrt{2}$ factor
bound for the Loss MEG.
When instances are dense, $\|\X_t\|_F \leq n$, resulting in
regret bound of $\sqrt{m kT}$.
To see that this bound is worse than the MEG bound 
for dense instances, we can consider the ratio of the
regret bound for GD over the regret bound for the
appropriate version of MEG. 
It is easy to check that when $k\leq \frac{n}{2}$, 
the ratio is $\Theta(\sqrt{\frac {\frac{m}{k}}{\ln(\frac{2m}{k})}})$, 
and when $k\geq \frac{n}{2}$, the ratio is 
$\Theta(\sqrt{ \frac {\frac{k}{m}} {\ln(\frac{2k}{m})}})$. 
In both case, the regret bound for MEG is better by more
than a constant factor.

We now conclude this section by investigating budget bounds for GD.
Since GD achieves the same time horizon dependent regret
bound as Loss MEG, 
we first conjectured that this is also the case for budget
dependent regret bounds.
However, this is not true: 
we will now show in remainder of this section a $k\sqrt{B}$ \emph{lower bound} on
the regret of GD for instance sequences with budget $\budget$.
Since Loss MEG has regret at most $\sqrt{kB}$ in this
case, this lower bound shows that GD is suboptimal by a factor of $\sqrt{k}$.

It suffices to prove the lower bound on a restricted data set.
As already observed in \cite{pca}, the PCA problem has the \emph{$m$-set}
problem as a special case. In this problem,
all instance matrices are diagonal (i.e. the eigenvectors
are the standard basis vectors) and therefore the algorithm
can restrict itself to choosing subspaces that are subsets of the standard basis
vectors.
In other words, PCA collapses to learning a subset of
$m=n-k$ experts. The algorithm chooses a subset of $m$ out of $n$ experts in
each trial, the loss of all experts is given as a vector $\ll\in[0,1]^n$,
and the loss of a set is the total loss of all experts in the set.
The algorithm maintains uncertainty over the $m$-sets by means of a 
weight vector $\w  \in [0,1]^n$, such that $\sum_{i=1}^n w_i = m$.
We denote the set of all such weight vectors as $\S_m$. The
above GD
algorithm for PCA
specializes to the following algorithm for learning sets:

\begin{equation}
\label{e:diaggd}
\begin{array}{ll}
\text{Gradient Descent step:}\quad   & \wh_{t+1} = \w_t - \eta\ll_t \\
\text{Projection step: }             & \w_{t+1} = \argmin_{\w \in \S_m}  \|\w-\wh_{t+1}\|^2.
\end{array}
\end{equation}
The projection step is a convex optimization problem with
inequality constraints and can be analyzed 
using its KKT conditions.
We only describe the projection step in two cases needed
for the lower bound.
Let $\w_t=(w^1,\cdots, w^n)$ be the weight of GD at trial
$t$. Our lower bound is for the sparse case. In the set
problem this means that the loss vectors $\ll_t$ are standard basis vectors. 
Let $\ll_t=\e_{i_t}$. In the simplest case,
the descent step decreases the weight of expert $i$ by $\eta$ 
and the projection step adds $\frac{\eta}{n}$ to all $n$ weights
so that the total weight remains $m$:
\begin{equation}
\widehat{\w}_{t+1} 
\!=\! (w^1,\!\ldots\!, w^i\!-\!\eta,\!\ldots\!, w^n), 
\w_{t+1}
\!=\! \Big(\!w^1 \!+\!\frac{\eta}{n},\!\ldots\!, w^i 
      \!-\!\frac{(n-1)\eta}{n},\! \ldots\!, w^n\!+\!\frac{\eta}{n}\Big).
\label{eq:gd1}
\end{equation}
Two problems may happen with the additive adjustment: 
$w_i - \eta+\frac{\eta}{n}$ might
be negative or some of the weights $w_j+\frac{\eta}{n}$
might be larger than 1.
The projection step is slightly more involved when this happens.
In our lower bound, we only need the following additional case:
$$\w_t=\big(\underbrace{w^1,\ldots, w^{i-1}}_{\text{all $\le 1-\frac{\eta}{n}$}},
\underbrace{w^i}_{<\frac{n-1}{n}\eta},
\underbrace{1-\delta,\ldots, 1-\delta}_{\text{for }\delta<\frac{w_i}{n}}\big).$$ 
One can show that in this case the projection
sets $w^i$ to 0, it sets the $n-i$ weights of size $1-\delta$ to
1, and it adds $\frac{w^i-(n-i)\delta}{i-1}$ to the first $i-1$
weights which are not capped. That is, in this case the projections
produces the following updated weight vector:
\begin{equation}
\label{eq:gd2}
 \w_{t+1} = \Big(w^1 +\frac{w^i -(n-i)\delta}{i-1},\ldots,w^{i-1} +\frac{w^i -(n-i)\delta}{i-1},
\!\!\!\!\!\!
\underbrace{0}_{\text{ capped at $0$}},
\!
\underbrace{1,\ldots,1}_{\text{capped at $1$}}\Big).
\end{equation}
Note that the total weight of the projected weight vector $\w_{t+1}$ is again $m$.

Now we are ready to give our regret lower bound for GD.
\begin{theorem}
For any $k \leq n/2$ and any learning rate $\eta$, 
there is a sparse loss sequence for online PCA which has budget $B$ 
and forces the GD algorithm \eqref{e:diaggd} to incur regret at least $\Omega(k\sqrt{B})$.
\end{theorem}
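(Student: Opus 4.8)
The plan is to work entirely within the $m$-set specialization already set up above: all loss vectors are standard basis vectors $\ll_t=\e_{i_t}$, GD keeps $\w_t\in\S_m$, and its expected loss in trial $t$ is the single hit coordinate $w^{i_t}_t$. Since every hit deposits one unit of loss on one expert, the cumulative loss of an expert is the number of times it is hit, the comparator's loss is the sum of the $m=n-k$ smallest of these counts, and the budget $B$ equals the number of hits landing \emph{outside} the $k$ most frequently hit experts. The whole game is therefore to schedule hits so that GD keeps paying a large coordinate $w^{i_t}_t$ while the loss stays concentrated on the $k$ dropped experts so that $B$ is controlled. The only tools needed for the weight bookkeeping are the two projection formulas already displayed: \eqref{eq:gd1}, the ``interior'' step in which the removed mass $\eta$ is spread uniformly, and \eqref{eq:gd2}, the ``boundary'' step in which one coordinate is capped at $0$ and several others saturate at $1$.

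\textbf{The construction.} I would take $k+1$ active experts and present the losses in $R$ identical rounds, each round cycling $\e_1,\dots,\e_{k+1}$ once, so each active expert is hit exactly $R$ times and the remaining $n-k-1$ experts are never touched. The comparator keeps all untouched experts together with the single least-loaded active expert, so its loss is exactly $R$; hence the sequence has budget $B=R$, and $R$ is the one knob I tune against $\eta$. Starting from the uniform $\w_1=\frac{m}{n}\1$, the total deficit $\sum_i(1-w^i)=k$ is small, so the untouched coordinates start at $1-\frac{k}{n}\ge\half$, just below the cap. By symmetry the active coordinates share a common value $w_a(r)$ that drifts down across rounds while the untouched coordinates drift up to the cap $1$.

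\textbf{Lower bounding the regret.} Within one round the $k+1$ hits, analysed by \eqref{eq:gd1}, produce a sawtooth: a coordinate that is about to be hit has already been pushed up by the earlier hits of the same round, so GD pays a round total of roughly $(k+1)w_a(r)+\frac{\eta k}{2}$ against the comparator's $1$. While $w_a(r)$ stays bounded away from $0$, which it does for the first $\Theta(1/\eta)$ rounds since $w_a$ falls by only $\Theta(\eta)$ per round, each round contributes excess loss $\Omega(k)$. Accumulating over $\min(R,\Theta(1/\eta))$ rounds gives regret $\Omega\big(k\min(R,1/\eta)\big)$. Taking $R=\lceil B\rceil$, this already yields $\Omega(k\sqrt{B})$ for every learning rate with $\eta\le 1/\sqrt{B}$ (and in fact $\Omega(kB)$ once $\eta\le 1/B$, where GD barely reacts).

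\textbf{The main obstacle.} The hard regime is that of large learning rates $\eta>1/\sqrt B$, where the interior phase lasts fewer than $\sqrt B$ rounds because GD overshoots: after very few hits the untouched coordinates saturate at $1$ and the hit coordinate is driven to $0$, so the dynamics is governed by the boundary step \eqref{eq:gd2} rather than \eqref{eq:gd1}. Here I would argue that each such step forces a near-capped block up to the cap while zeroing the hit coordinate, and then reschedule the hits so that GD is repeatedly made to pay a freshly re-saturated coordinate, the bookkeeping being exactly the content of \eqref{eq:gd2}. The delicate point, and the reason the bound is $k\sqrt B$ rather than $kB$, is that re-inflating a coordinate costs hits elsewhere and hence budget, so GD cannot be pushed arbitrarily far past $\Omega(B)$ for a fixed budget; the resolution is to balance the two regimes, running the construction only while the interior phase is productive. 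At the balance scale $\eta\asymp 1/\sqrt B$ the productive horizon is $\asymp\sqrt B$ rounds, each of excess $\Omega(k)$, which is precisely where the $\sqrt B$ originates, while the factor $k$ is the per-round excess created by operating on $k+1$ coordinates at once. Carrying out the \eqref{eq:gd2} accounting uniformly in $\eta$, and verifying that the comparator's loss stays $\Theta(B)$ throughout, is the step that requires the most care.
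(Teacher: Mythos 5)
Your interior-regime analysis is essentially right, and for learning rates $\eta\le c/\sqrt{B}$ your cyclic construction does deliver $\Omega(k\sqrt{B})$. But the large-$\eta$ regime, which you defer as ``the step that requires the most care,'' is a genuine gap, and your symmetric schedule provably cannot be patched to close it. Concretely, take $\eta$ very large (say $\eta\ge 2n$), so that every hit zeroes the hit coordinate. The $n-k-1$ untouched coordinates are never hit, so their weights only increase; once they cap at $1$, the $k+1$ active coordinates share total weight exactly $m-(n-k-1)=1$, and each subsequent hit is a pure boundary step \eqref{eq:gd2}: the hit coordinate's entire weight is split equally among the other $k$ active coordinates. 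Your cyclic schedule then has an explicit steady state in which the coordinate last hit $j$ steps ago carries weight $\frac{2(j-1)}{k(k+1)}$, so the coordinate about to be hit carries weight $\frac{2}{k+1}$; GD pays $\frac{2}{k+1}$ per hit, i.e.\ $2$ per round, against the comparator's $1$. The per-round excess thus collapses from $\Omega(k)$ to $O(1)$, and (after a transient whose length and cost depend only on $n,k,\eta$, not on $B$) the regret your sequence extracts is $O(B)$, which is $o(k\sqrt{B})$ whenever $B$ is large relative to the transient but $B\muchless k^2$. Neither of your proposed remedies helps: stopping ``while the interior phase is productive'' means stopping after $\Theta(1/\eta)\muchless\sqrt{B}$ rounds, giving regret only $O(k/\eta)$ and a budget far below $B$; and no rescheduling that treats the $k+1$ active experts symmetrically can work, because then every $k+1$ hits necessarily add one unit to the comparator's loss, so an $O(1)$ steady-state excess per unit of budget is fatal.

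The missing idea --- and the heart of the paper's proof --- is an \emph{asymmetric} construction with budget-free hits. The paper designates a single \emph{faulty} expert, which the comparator keeps and which alone receives budget-bearing hits, and $k$ \emph{bad} experts, which the comparator drops, so hits on them cost no budget. A first phase of budget-free hits on the bad experts costs GD $\Omega(k/\eta)$ while driving the faulty and good experts' weights near $1$. Then each of $B$ rounds hits the faulty expert once (by \eqref{eq:gd2} its weight overshoots down to at most $\max\{1-\eta/2,0\}$) and pumps it back up by hitting bad experts; because the $m-1$ good experts are capped at $1$, each pumping trial's redistribution is shared among the faulty and the $k$ bad experts, so the faulty expert recovers at most a $\frac{1}{k+1}$ fraction of the loss GD pays in that trial. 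Restoring its weight therefore costs GD an extra $\Omega(k\eta')$ per round, $\eta'=\min\{\eta,1\}$, at zero budget, for a total of $\Omega(k/\eta)+B+\Omega(kB\eta')$ against a comparator loss of exactly $B$; balancing $k/\eta$ against $kB\eta$ yields $\Omega(k\sqrt{B})$ for \emph{every} $\eta$. This budget-free pumping mechanism, with its $\frac{1}{k+1}$ recovery bottleneck, is exactly what your construction lacks and what the hard regime requires.
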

\begin{proof}
W.l.o.g.,
assume all the experts have the same initial weight $m/n=(n-k)/n\geq 1/2$.
Call the first $k$ experts \emph{bad experts}, 
the $(k+1)$st expert the \emph{faulty expert} 
and the last
$m-1$ experts \emph{good experts}.
Let $\eta'=\min\{\eta,1\}$.

The loss sequence consists of two phases.
We will show that the algorithm suffers loss at least 
$\Omega(\frac{k}{\eta})$ in the first phase
and essentially loss at least $B+\Omega(kB \eta)$ in the second
phase. The optimum trade-off between these two
term give the lower bound.

More precisely, in the first phase unit losses are given to bad experts
and in each trial the algorithm suffers the current weight of the chosen bad expert.
The phase ends when each of the good experts and the faulty expert have weights at least $1-\frac{\eta'}{4m}$. 
To show that the algorithm suffers loss at least $\Omega(\frac{k}{\eta})$ in this phase,
first notice that for a particular bad expert, 
its weight decreases by at most $\frac{n-1}{n}\eta$ when it receives 
a unit of loss and increases in all of the other trials (see \eqref{eq:gd1} and \eqref{eq:gd2}).
So when it receives loss for the $s$-th time, its weight is lower bounded by 
\begin{equation}
\label{eq:gd_lower}
\max\left\{\frac{m}{n}-(s-1)\frac{n-1}{n}\eta,0\right\}  \geq \max\left\{\frac{1}{2}-(s-1)\eta,0\right\} 
\end{equation}
The sum of \eqref{eq:gd_lower} with $s=1,2,\dots$ is a lower bound 
on the loss of the algorithm when this particular bad expert incurred a unit of loss.
Note that $\eqref{eq:gd_lower}$ is the term of an arithmetic series 
that is capped from below by zero. 
One can show that as long as there is a constant gap between the first and last term 
of the summation, then the sum of these terms
is at least $\Omega(\frac{1}{\eta})$.
In our case, this gap is at least $1/4$ since the first term (initial weight) is at least $1/2$ and the last term, upper bounded by this bad expert's weight after phase one, is less than  
\[ \underbrace{m}_{\text{\begin{tabular}{c}sum of the
weights \\of all experts\end{tabular}}} -
\underbrace{m(1-\frac{\eta'}{4m})}_{ \text{\begin{tabular}{c}sum of the weights \\of faulty and
good experts\end{tabular}}} \leq \frac{1}{4}. \]
Since we have $k$ bad experts, we obtained a $\Omega(\frac{k}{\eta})$ 
lower bound on the loss of the algorithm during the first phase.

We now describe the second phase which lasts for $B$ rounds, 
where each round consists of several trials.
At the beginning of each round, 
the faulty expert receives one unit of loss. Its weight
after the gradient descent step is at most $1-\eta$ and
after the projection step it can be shown to be at most 
$\max\{1 -  \eta/2,0\}$ (see \eqref{eq:gd2}).
Notice that after the first trial of each round, 
all good experts will have weight 1 
since they are at least $1-\frac{\eta'}{4m}$ after phase one.
In the following trials of this round, unit losses are given to bad experts 
until the faulty expert recovers its weight to at least $1-\frac{\eta'}{4m}$.
Since all the weights of good experts are capped at $1$,
the re-balancing of weights only occurs between
the faulty and the bad experts. This means that in each trial, 
the faulty expert can only recover at most $1/(k+1)$ of the
loss incurred by the algorithm in this trial.
Thus we can lower bound the loss of algorithm in a given round as follows:
\[
1- \frac{\eta'}{4m}+(k+1)\Big(1 - \frac{\eta'}{4m}- \max\Big\{1- \frac{\eta}{2},0\Big\}\Big) \geq 1-\frac{\eta'}{4m}+\frac{k+1}{4}\min\{\eta,1\} = 1+\Omega(k\eta').
\]
After $B$ such rounds, algorithm suffers loss at least $B+\Omega(kB\eta')$:
When $\eta \geq 1$, this is $B+\Omega(kB)$ and when $\eta
\leq 1$, then summing up the bounds of the two phases,
gives an $\Omega(k/\eta) + B +\Omega(B k\eta)$ lower bound on the loss of the algorithm.
The latter is minimized at $\eta=\Theta(1/\sqrt{B})$ and for this choice
of $\eta$, the algorithm suffers loss at least $B+\Omega(k\sqrt{B})$.
The theorem now follows, since the best off-line $m$-set for the loss sequence consists 
of the faulty expert, which suffers total loss $B$, and all $m-1$
good experts, which incur no loss.
\qed
\end{proof}

\section{Lower bounds and optimal algorithms}
\label{sec:lower_bounds}
In the previous section, we showed a lower bound on the regret of GD
as a function of the budget $B$ of the sequence. 
In this section we show regret bounds for any algorithm
that solves the problem. In particular, we show regret lower bounds 
for online PCA and its generalization to the dense instance case.
As argued in Section \ref{sec:GD}, it suffices to prove our lower bounds for the $m$-set 
problem which is the vector version of online PCA and its
generalization to dense instances.
We prove lower bound on the minimax regret, 
i.e. the minimum worst case regret any algorithm can
achieve against the best set:
\[
 \min_{\ssize{  \begin{tabular}{c}  alg. A with \\  $\w_t \in \S_m$  \end{tabular}}}
\;\;
 \max_{\ssize{\begin{tabular}{c}  sparse/dense loss seq. \\  $\ll_{1 \dots T}$ of length T  \end{tabular}}} 
\quad\quad \underset{\ssize{\begin{tabular}{c}loss of alg. A 
- loss of best set \\
on loss sequence $\ll_{1 \dots T}$\end{tabular}}}{R(A,\ll_{1\dots T})}.
\]
Recall that $\S_m$ were vectors in $[0,1]^n$ of total
weight $m$ that represent mixtures of sets of size $m$.
Our lower bounds will match the uppers bounds on the regret
of MEG (within constant factors) that we proved in the previous section 
for both online PCA and its generalization to dense instances.
The lower bounds rely on the following
probabilistic bounding technique for the minimax regret:
\begin{align}
 \!\!\min_{\ssize{  \begin{tabular}{c}   alg. A with \\  $\w_t \in \S_m$ \end{tabular}}}
  \!\max_{\ssize{\begin{tabular}{c} 
loss seq.  $\ll_{1\dots_T}$  \\ of length T  \end{tabular}}} \!\!\!R(A,\ll_{1\dots T}) \; 
 &\geq   \!\!\!\!\min_{\ssize{  \begin{tabular}{c}   alg. A with \\
$\w_t \in \S_m$\end{tabular}}} \!\!\E[\ll_{1\dots T}\sim \mathcal{P}]{  R(A,\ll_{1\dots T})} \nonumber \\
\label{eq:lower1} & =\!\!\!\!\min_{\ssize{  \begin{tabular}{c}
alg. A with \\  $\w_t \in \S_m$ \end{tabular}}} \!\!\E[\ll_{1\dots T}\sim \mathcal{P}] {L_A} -   \E[L\sim \mathcal{P}]{L_C},
\end{align}
where $\mathcal{P}$ is any distribution on loss sequences, and
$L_A$ and $L_C$ are the cumulative losses of the algorithm and 
the best $m$-set, respectively.

\subsection{Lower bounds for PCA with sparse instances}
\label{sec:PCA_setting}
Recall that for the vectorized version of PCA, the loss
vectors $\ll_t$ are restricted to be standard basis vectors.
We start with the following technical lemma for two experts.
\begin{lemma}
\label{lem:ex}
Let $p \in [0,1]$ be such that 
$p\leq 1/4$ and $Tp\geq 1/2$. Assume that in a two expert
game, one of the experts is randomly chosen to suffer one unit of loss
with probability $2p$ in each trial, 
and with probability $1-2p$ none of the experts suffers any loss. 
Then, after $T$ independent trials, 
\[\E{ \text{Loss of the winner}}  \leq\;  Tp - c\sqrt{Tp},
\]
for a constant $c$ independent of $T$ and $p$.
\end{lemma}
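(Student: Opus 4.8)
The plan is to make the loss-generation process explicit and reduce the quantity of interest to the first absolute moment of a mean-zero random walk. For each trial $t$ I would introduce a variable $D_t \in \{-1,0,+1\}$ equal to $+1$ if expert $1$ is hit, $-1$ if expert $2$ is hit, and $0$ otherwise; by assumption $D_t = \pm 1$ each with probability $p$ and $D_t = 0$ with probability $1-2p$, so the $D_t$ are i.i.d.\ and mean-zero. Writing $X_1,X_2$ for the cumulative losses of the two experts and $N = X_1+X_2$ for the number of trials in which some loss is incurred, we have $S_T := X_1 - X_2 = \sum_{t=1}^T D_t$ and $N \sim \mathrm{Binomial}(T,2p)$. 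Since the winner is the expert with the smaller loss, its loss is $\min(X_1,X_2) = \tfrac12(N - |S_T|)$. Taking expectations and using $\E{N} = 2pT$ gives
\[
\E{\text{Loss of the winner}} \;=\; pT - \tfrac12\,\E{|S_T|},
\]
so the lemma reduces to the anti-concentration estimate $\E{|S_T|} \geq c'\sqrt{pT}$ for a universal constant $c'$.

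To lower-bound $\E{|S_T|}$ without invoking the Central Limit Theorem (the bound must be non-asymptotic), I would use the Lyapunov/H\"older interpolation of $L^p$ norms at the exponents $1,2,4$: from $\|S_T\|_2 \leq \|S_T\|_1^{2/3}\|S_T\|_4^{1/3}$ we obtain
\[
\E{|S_T|} \;\geq\; \frac{(\E{S_T^2})^{3/2}}{(\E{S_T^4})^{1/2}}.
\]
The two moments are computed directly from independence together with the fact that $D_t^4 = D_t^2$: one finds $\E{S_T^2} = 2pT$ and $\E{S_T^4} = 2pT + 3T(T-1)(2p)^2 \leq 2pT + 3(2pT)^2$.

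The hypothesis $Tp \geq 1/2$ enters exactly here: it makes $\mu := 2pT \geq 1$, so $\mu \leq \mu^2$ and hence $\E{S_T^4} \leq 4\mu^2$. Substituting into the interpolation bound yields $\E{|S_T|} \geq \mu^{3/2}/(2\mu) = \tfrac12\sqrt{2pT}$, whence $\E{\text{Loss of the winner}} \leq pT - \tfrac14\sqrt{2pT}$, proving the claim with $c = \tfrac{\sqrt2}{4}$, a constant independent of $T$ and $p$. The constraint $p \leq 1/4$ serves only to keep $2p \leq 1/2$ a legitimate hit probability. The main obstacle is securing a clean constant-factor lower bound on $\E{|S_T|}$ in the small-mean regime where $\mu$ is close to $1$: a naive Chebyshev or Paley--Zygmund argument, or the route through $\E{\sqrt{N}}$, degrades there, whereas the fourth-moment interpolation is uniform in $\mu \geq 1$ and uses no asymptotic input, which is precisely the non-asymptotic feature the paper emphasizes.
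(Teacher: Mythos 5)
Your proposal is correct. Note that the paper never actually presents its own proof of this lemma (it is explicitly omitted ``due to the space limit''), so there is no line-by-line comparison to make; what can be said is that your argument is self-contained and delivers exactly the feature the paper advertises for its lower bounds, namely a non-asymptotic estimate that avoids the Central Limit Theorem. Checking the steps: the reduction $\min(X_1,X_2)=\tfrac12\left(N-|S_T|\right)$ with $\E{N}=2pT$ is exact, so the lemma indeed reduces to $\E{|S_T|}\ge c'\sqrt{pT}$; the anti-concentration bound $\E{|S_T|}\ge (\E{S_T^2})^{3/2}/(\E{S_T^4})^{1/2}$ is the standard consequence of H\"older's inequality, $\E{S_T^2}=\E{|S_T|^{2/3}\,|S_T|^{4/3}}\le(\E{|S_T|})^{2/3}(\E{S_T^4})^{1/3}$; and the moment computations $\E{S_T^2}=2pT$ and $\E{S_T^4}=2pT+3T(T-1)(2p)^2\le 4(2pT)^2$ (where $2pT\ge 1$ is used to absorb the linear term) are right, giving the claim with $c=\sqrt{2}/4$. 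Two minor remarks. First, your intermediate interpolation statement written as an inequality between $L^p$ norms has the exponents misplaced: it should read $\|S_T\|_2\le\|S_T\|_1^{1/3}\|S_T\|_4^{2/3}$, not $\|S_T\|_1^{2/3}\|S_T\|_4^{1/3}$; however, the displayed moment inequality you actually use downstream is the correct consequence, so this is a typo rather than a gap. Second, your proof only uses $p\le 1/2$ (so that $2p$ is a valid probability) together with $Tp\ge 1/2$; the stated hypothesis $p\le 1/4$ is stronger than what you need, which is harmless since the lemma's hypotheses remain sufficient, and slightly strengthens the result.
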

Due to the space limit, we omit the proof of this lemma.
We are now ready to prove a lower bound for PCA. 
We first consider the case when $k\le \frac{n}{2}$.
\begin{theorem}
\label{th:sparse1}
For $T\geq k$ and $k\leq \frac{n}{2}$, 
in the $T$ trial online PCA problem with sparse instances, 
any online algorithm suffers worst case regret at least
$\Omega(\sqrt{kT})$.
\end{theorem}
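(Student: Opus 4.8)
The plan is to lower-bound the minimax regret by the expected regret under a well-chosen random loss sequence, exactly via the inequality \eqref{eq:lower1}. I would construct the distribution $\mathcal{P}$ so that the $n$ experts are grouped into $n/2$ disjoint \emph{pairs}, and within each pair Nature runs an independent copy of the two-expert game from Lemma~\ref{lem:ex}: in each trial, with probability $2p$ one of the two experts in the pair (chosen uniformly) suffers a unit of loss, and otherwise neither does. Summing over all $T$ trials, each expert receives an expected loss of $Tp$, and in fact the cumulative loss of any fixed set is concentrated around its mean. The point of pairing is that the best $m$-set is forced to \emph{discard} one expert from each of the $k$ pairs it cannot fully keep, and in each such discarded pair the comparator keeps only the \emph{winner} (the less-loaded expert), so Lemma~\ref{lem:ex} supplies the crucial $-c\sqrt{Tp}$ savings per pair.

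First I would make the bookkeeping precise. Since any algorithm must put total weight $m=n-k$ on the $n$ experts while the best set keeps $m$ of them, the best $m$-set drops exactly $k$ experts; I would arrange the pairing and parameters so that dropping amounts to keeping the winner in $k$ of the pairs. For the algorithm side, no online rule can do better than treat the experts symmetrically under this symmetric $\mathcal{P}$, so the expected loss of \emph{any} $A$ is at least its ``fair share'': since the total expected loss across all experts per trial is $2p\cdot(n/2)\cdot\tfrac{1}{2}\cdot\ldots$, and the algorithm carries weight $m$ out of $n$, its expected cumulative loss is $\frac{m}{n}\cdot(\text{total expected loss})$. Concretely the total expected loss over $T$ trials is $Tp\cdot n$ spread over experts, so $\E{L_A}\ge \frac{m}{n}\,(n\,Tp)=mTp$ — I would verify the constants carefully here, but the structure is that the algorithm pays the mean with no sublinear discount available to it.

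Next I would assemble the two sides. From Lemma~\ref{lem:ex}, in each of the $k$ pairs whose winner the comparator keeps, the expected loss of that winner is at most $Tp-c\sqrt{Tp}$, while in the remaining pairs the comparator keeps both experts at expected loss $Tp$ each. Differencing $\E{L_A}-\E{L_C}$, the bulk $mTp$ terms cancel against the comparator's mean losses, and what survives is the accumulated $\sqrt{\cdot}$ savings: a total of $k\cdot c\sqrt{Tp}$. Choosing $p$ as large as the lemma permits (i.e. $p=1/4$, so $Tp=\Theta(T)$ and the hypotheses $p\le 1/4$, $Tp\ge 1/2$ hold once $T\ge k$ is large enough) yields a bound of order $k\sqrt{T}$. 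A subtlety is that $k\sqrt{T}$ is \emph{larger} than the target $\sqrt{kT}$, so I expect the correct tuning is instead $p=\Theta(1/k)$: then each pair contributes $c\sqrt{T/k}$ and $k$ pairs give $k\cdot\sqrt{T/k}=\sqrt{kT}$, matching the theorem. I would check that $p=\Theta(1/k)\le 1/4$ and $Tp=\Theta(T/k)\ge 1/2$ both hold under $T\ge k$, which they do.

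The main obstacle I anticipate is the algorithm's lower bound $\E{L_A}\ge mTp$ being \emph{tight enough}: I must rule out the possibility that a clever online algorithm spends its weight-$m$ budget so as to mimic the comparator's $\sqrt{\cdot}$ savings. The clean way is a symmetrization argument — because $\mathcal{P}$ is invariant under permuting experts within each pair (and permuting the pairs), one may average any algorithm over this symmetry group without increasing its expected loss, and the symmetrized algorithm necessarily assigns equal weight to the two experts of each pair \emph{before} seeing which one is hit, so it cannot exploit the within-pair asymmetry that the hindsight comparator enjoys. Making this permutation-invariance rigorous, and confirming that the per-pair savings genuinely add across the $k$ discarded pairs (the pairs being independent under $\mathcal{P}$, this additivity should follow by linearity of expectation), is the technical heart of the argument.
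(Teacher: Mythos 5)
Your overall strategy (pairing, Lemma~\ref{lem:ex}, and the probabilistic bound \eqref{eq:lower1}) is the same as the paper's, and your tuning $p=\Theta(1/k)$ is the right one; but your construction has a genuine flaw: the loss sequences it generates are not sparse. Theorem~\ref{th:sparse1} concerns sparse instances, which in the $m$-set reduction means each loss vector $\ll_t$ is a standard basis vector --- at most one unit of total loss per trial. Your $\mathcal{P}$ runs $n/2$ \emph{independent} pairs, each firing with probability $2p$, so a single trial carries a $\mathrm{Binomial}(n/2,\,2p)$ number of loss units, i.e.\ $\Theta(n/k)\geq 2$ in expectation when $p=\Theta(1/k)$ and $k\leq n/2$. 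The support of $\mathcal{P}$ is therefore not contained in the set of sparse sequences, and \eqref{eq:lower1} does not lower bound the sparse minimax regret. The warning sign is visible in your own computation: with $p=1/4$ you obtained a ``lower bound'' of order $k\sqrt{T}$, which would \emph{contradict} the $O(\sqrt{kT})$ upper bounds that Loss MEG and GD attain on sparse sequences (Theorem~\ref{th:T_MEG} and Section~\ref{sec:GD}). This is not a tuning issue, as you suggest, but a sign that your construction belongs to the dense setting of Theorem~\ref{th:dense}, where regret of order $k\sqrt{T\ln(n/k)}$ is indeed achievable.

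The fix is the paper's construction: give exactly one unit of loss per trial, to a uniformly random expert among the \emph{first $2k$} experts only, leaving the other $n-2k$ experts loss-free. Every loss vector is then a standard basis vector. Group the $2k$ lossy experts into $k$ pairs; these pairs are now correlated, but Lemma~\ref{lem:ex} and linearity of expectation only require the per-pair \emph{marginal}, which is exactly the two-expert game with $p=\frac{1}{2k}$ (the hypotheses $p\leq 1/4$ and $Tp\geq 1/2$ hold since $T\geq k$). The comparator that keeps the $n-2k$ loss-free experts plus the $k$ pair-winners has expected loss at most $T/2 - c\sqrt{kT}$. On the algorithm side, your ``fair share'' argument needs one ingredient your fully symmetric construction never forces you to confront: because each weight is capped at $1$, the loss-free experts can absorb at most $n-2k$ of the total weight $m=n-k$, so every algorithm must keep weight at least $k$ on the lossy block, giving expected loss at least $k\cdot\frac{1}{2k}=\frac{1}{2}$ per trial, hence $T/2$ overall. (Your worry about symmetrization is moot in either construction: losses are i.i.d.\ across trials and independent of the algorithm's current weights, so any algorithm's expected loss is pinned down by linearity of expectation and cannot beat this floor.) Differencing the two bounds gives the claimed $\Omega(\sqrt{kT})$.
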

\begin{proof}
At each trial, a randomly chosen expert 
out of the first $2k$ experts receives a unit of loss. 
To show an upper bound on the loss of the comparator, 
we group these $2k$ experts into $k$ pairs and notice that the losses of each expert pair have a joint distribution as
described in Lemma \ref{lem:ex} with $p=\frac{1}{2k}$.
Hence, the expected loss of the winner in each pair is at most $T/2k-c\sqrt{T/k}$, 
and the total expected loss for the $k$ winners from all $k$ pairs
is upper bounded by $T/2 - c\sqrt{kT}$.
Since the last $n-2k$ experts are loss-free, 
this is also an upper bound on the expected loss of the comparator,
because the comparator will pick $n-2k$ loss-free experts and $k$ best experts among the remaining $2k$ experts. 
On the other hand, since losses are generated independently
between trials, any online algorithm suffers loss at least $T/2$.
Taking the difference between two bounds concludes the proof. \qed
\end{proof}
Noting that $m\sqrt{\ln (n/m)T/n)} \leq  m\sqrt{(k/m)T/n} \leq \sqrt{kT}$, 
the lower bound in Theorem \ref{th:sparse1} matches the upper bound of Loss MEG algorithm in Theorem \ref{th:T_MEG} for the case $k\leq \frac{n}{2}$.
For the case $k\geq \frac{n}{2}$,
we need the following lemma, which 
is a generalization of Lemma \ref{lem:ex} to $n$ experts. 
In the proof we upper bound the minimum loss of the experts 
by the loss of the winner of a tournament among the experts. 
The tournament winner does not necessarily have the lowest
loss. However as we shall see later, its expected loss is close enough 
to the expected loss of the best expert to make this bounding techniques useful
for obtaining lower bounds on the regret.
\begin{lemma}
\label{lem:log}
Choose any $n, S$ and $T$, such that $n=2^S$ and $S$ divides $T$. 
If the loss sequence of length $T$
is generated from a distribution $\mathcal{P}$, such that:
\vspace{-0.3cm}
\begin{itemize}
\item at each trial $t$, 
the distribution of losses on $n$ experts is exchangeable,
\item the distribution of losses is i.i.d. between trials,
\end{itemize}
then,
\begin{align*}
&\E{\text{Minimum loss of $n$ experts in $T$ trials} } \\
&\qquad\qquad\leq S \;\E{\text{Loss of the winner among two experts in $T/S$ trials} }.
\end{align*}
 \end{lemma}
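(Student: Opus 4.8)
The plan is to realize the minimum over $n=2^S$ experts as the outcome of a balanced single-elimination tournament and to exploit the symmetry coming from exchangeability. First I would partition the $T$ trials into $S$ consecutive blocks of length $T/S$ (this uses that $S$ divides $T$), and arrange the $n=2^S$ experts as the leaves of a complete binary tree of depth $S$. In round $r$ (for $r=1,\dots,S$) the two entrants of each node at height $r$ compete over block $r$, and the entrant with the smaller cumulative loss \emph{in that block} advances; after $S$ rounds a single \emph{champion} remains. Since the champion is one particular expert, its total loss $L_{\mathrm{ch}}$ over all $T$ trials deterministically dominates the minimum loss, so $\E{\text{minimum loss of }n\text{ experts}} \le \E{L_{\mathrm{ch}}}$, and the whole problem reduces to bounding $\E{L_{\mathrm{ch}}}$.

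Next I would decompose $L_{\mathrm{ch}} = \sum_{r=1}^{S} W_r$, where $W_r$ is the champion's loss during block $r$. Because the champion won its round-$r$ match, $W_r$ equals the \emph{winner's loss} of that single match, which is played over $T/S$ trials. By linearity it then suffices to show that for every $r$ we have $\E{W_r} = g$, where $g := \E{\text{loss of the winner among two experts in }T/S\text{ trials}}$; summing over $r$ then gives $\E{L_{\mathrm{ch}}} = S g$ and hence the claimed bound.

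To compute $\E{W_r}$ I would condition on the losses in the first $r$ blocks. This fixes the entrants of every round-$r$ match (these depend only on the first $r-1$ blocks), the winners $w_1,\dots,w_{2^{S-r}}$ of those matches, and their block-$r$ losses $V_1,\dots,V_{2^{S-r}}$. Since the two entrants of each match are determined \emph{before} block $r$, and block $r$ is independent of the earlier blocks, exchangeability within a trial makes each $V_j$ the winner's loss of a genuine two-expert game over $T/S$ trials, so $\E{V_j}=g$. The champion is then the winner of the sub-tournament played among $w_1,\dots,w_{2^{S-r}}$ over the remaining blocks $r+1,\dots,S$, which by the i.i.d.-between-trials assumption is independent of the first $r$ blocks.

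The crux, and the step I expect to be the main obstacle, is to prove that conditioned on the first $r$ blocks each entrant $w_j$ is equally likely (probability $2^{-(S-r)}$) to become the champion. I would establish this by induction on the depth of the sub-bracket: exchangeability makes each entrant equally likely to win its half of the sub-bracket, and, conditioned on the two half-winners (any two distinct experts), the symmetry of the final block makes each of them win with probability $\tfrac12$ \emph{regardless of their identities}, so the two contributions multiply to $2^{-(S-r)}$. Granting this, $\E{W_r \mid \text{blocks }1,\dots,r} = 2^{-(S-r)}\sum_j V_j$, and taking expectations together with $\E{V_j}=g$ yields $\E{W_r}=g$, completing the argument. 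One minor point to handle carefully is the tie-breaking rule, which must be symmetric (for instance, uniform at random) so that these symmetry claims remain exact.
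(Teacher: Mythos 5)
Your proof is correct and follows essentially the same route as the paper: both bound the minimum loss by the loss of the winner of a balanced single-elimination tournament with $S$ rounds played over $T/S$ trials each, and show that the tournament winner's expected loss equals $S$ times the expected loss of the winner of a two-expert game. In fact, your conditioning argument (that given the first $r$ blocks, each round-$r$ match winner is equally likely, with probability $2^{-(S-r)}$, to become the champion) supplies precisely the ``additional argument'' needed to handle exchangeability rather than i.i.d.\ experts, which the paper's space-limited sketch mentions but does not spell out.
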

\begin{proof}
Due to the space limit, we only sketch the proof.
The key idea in the proof is to upper bound the loss of the
best expert by the loss of the expert that wins a {\em tournament}
with $S$ rounds. 
In each round, the experts are paired and compared against
their partners, using the sum of their losses in the next $T/S$ consecutive trails.
The winner of each local pair competition survives to the next round. 
The winners are again paired and winners among those continue
with the tournament until one expert is left from the original $n=2^S$ experts. 
We call this expert the tournament winner. 
The expected loss of the best expert is
upper bounded by the expected loss of the tournament
winner, which curiously enough
\newpage
equals the number of rounds times the
expected loss of the two expert case:
\begin{align*}
\E{\text{Minimum loss of all $n$ experts in all $T$ trials} }\!\!\!\!\!\!\!\!\!\!\!\!\!\!\!\!\!\!\!\!\!\!\!\!\!\!\!\!\!\!\!\!\!\!\!\!\!\!\!\!\!\!\!\!\!\!\!\!\!\!\!\!\!\!\!\!\!\!\!\!\!\!\!\!\!\!\!\!\!\!\!\!\!\!\!\!\!\!\!\!\!\!\!\!\!\!\!\!&\\
\leq\;\;\qquad\quad&\E{\text{\begin{tabular}{c}Loss of the
{\em tournament} winner in the\\ S rounds tournament among
the $n=2^S$ experts\end{tabular}}}\\
\stackrel{\smallf{since expectations sum}}{\text{\LARGE$=$}}\;&
\sum_{\text{rounds $1\le s\le S$}}\E{\text{Loss of {\em tournament} winner in round $s$}}\\
\stackrel{\smallf{i.i.d. loss btw. trials }}{\text{\LARGE$=$}}\;\;&S\;\E{\text{Loss of the  {\em tournament} winner in one round}}\\
\stackrel{\smallf{def. of local tournament}}{\text{\LARGE$=$}}&S \; 
\E{\text{Loss of winner among two experts in $T/S$ trials}}
.
\end{align*}
The last equality would be trivial if the distribution $\mathcal{P}$
on the sequence of loss vectors was i.i.d. between experts.
An additional argument is needed to show the equality with
the weaker assumption of exchangeability. 
\qed 
\end{proof}
We now consider the uncommon case when $k\ge \frac{n}{2}$:
\begin{theorem}
\label{th:sparse2}
For $T\geq n\log_2(n/m)$ and $k\geq \frac{n}{2}$,
in the $T$ trial online PCA problem with sparse instances, 
any online algorithm suffers worst case regret at least $\Omega(m\sqrt{\ln(n/m)T/n})$.
\end{theorem}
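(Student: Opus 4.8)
The plan is to instantiate the probabilistic lower bound \eqref{eq:lower1} with a symmetric distribution $\mathcal{P}$ and then bound its two terms separately. I would take $\mathcal{P}$ to be the sparse distribution in which, at each of the $T$ trials, a single expert drawn uniformly from all $n$ experts receives one unit of loss, independently across trials. The first and easier term is the algorithm's loss: since the loss-bearing expert in trial $t$ is independent of $\w_t$ (which depends only on the past), the expected loss in trial $t$ equals $\frac{1}{n}\sum_{i} w_t^i = \frac{m}{n}$, so $\mathbb{E}_{\mathcal{P}}[L_A] = \frac{mT}{n}$ for \emph{every} algorithm, and hence $\min_A \mathbb{E}_{\mathcal{P}}[L_A] = \frac{mT}{n}$. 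The crucial feature is that, because the loss is globally uniform, the algorithm cannot shift its mass off the ``active'' part of the instance space; this is exactly what fails in the blockwise constructions and is what makes the bound tight and algorithm-independent.

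The harder term is to upper bound the loss of the best $m$-set. Here I would partition the $n$ experts into $m$ groups of size $N = n/m$ and observe that choosing the single lowest-loss expert in each group yields a (generally suboptimal) $m$-set; hence by linearity $\mathbb{E}_{\mathcal{P}}[L_C] \le \sum_{j=1}^m \mathbb{E}_{\mathcal{P}}\big[\min_{i \in \text{group } j} L_i\big]$. The groups' loss processes are \emph{not} independent, but this is irrelevant: each summand involves only the marginal process on one group, in which a uniformly chosen expert of the group is hit with probability $1/m$ per trial. This marginal is exchangeable across the group and i.i.d.\ across trials, which are precisely the hypotheses of Lemma~\ref{lem:log}.

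To bound one group's minimum I would apply Lemma~\ref{lem:log} with $S' = \log_2(n/m)$ rounds, reducing to a two-expert game over $T/S'$ trials, and then apply Lemma~\ref{lem:ex}. In that game each of the two experts is hit with probability $1/n$ per trial, so the pair parameter is $p = 1/n$; the hypotheses $p \le 1/4$ and $(T/S')\,p \ge 1/2$ of Lemma~\ref{lem:ex} follow from $n \ge 4$ and from the standing assumption $T \ge n\log_2(n/m)$. This gives $\mathbb{E}_{\mathcal{P}}\big[\min_{i\in\text{group }j} L_i\big] \le \frac{T}{n} - c\sqrt{\frac{S' T}{n}}$, and summing over the $m$ groups yields $\mathbb{E}_{\mathcal{P}}[L_C] \le \frac{mT}{n} - c\,m\sqrt{\frac{T\log_2(n/m)}{n}}$. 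Subtracting this from $\min_A\mathbb{E}_{\mathcal{P}}[L_A] = \frac{mT}{n}$ in \eqref{eq:lower1} produces the claimed $\Omega\!\left(m\sqrt{\ln(n/m)\,T/n}\right)$ lower bound.

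The main obstacle I anticipate is bookkeeping rather than conceptual. Lemma~\ref{lem:log} requires $N = n/m$ to be a power of two and $S' = \log_2(n/m)$ to divide $T$, and it requires the per-group process to genuinely meet the exchangeability/i.i.d.\ hypotheses. I would dispatch the divisibility requirements by absorbing the rounding into the hidden constant, since replacing $n/m$ by the nearest power of two costs only a constant factor in the $\sqrt{\log}$ term, and I would verify the exchangeability directly from the uniform construction. Once the parameter $p = 1/n$ and the round count $S' = \log_2(n/m)$ are correctly identified, the chaining of the two reductions is mechanical.
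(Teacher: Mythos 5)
Your proposal is correct and follows essentially the same route as the paper's proof: the same uniform single-hit distribution, the same algorithm-loss bound of $mT/n$, the same partition into $m$ groups of size $n/m$, and the same chaining of Lemma~\ref{lem:log} (with $S=\log_2(n/m)$) into Lemma~\ref{lem:ex} (with $p=1/n$), yielding the identical comparator bound $mT/n - cm\sqrt{\log_2(n/m)\,T/n}$. Your additional care in verifying the lemmas' hypotheses and the divisibility bookkeeping only makes explicit what the paper leaves implicit.
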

\begin{proof}
At each trial, a randomly chosen expert out of $n$ experts receives a unit of loss. 
To show an upper bound on the loss of the comparator, 
we partition the $n$ experts into $m$ groups and notice that the losses of the $n/m$ experts in each group are exchangeable. 
By applying Lemma \ref{lem:log} to each group, we obtain: 
\begin{align}
&\E{\text{Loss of the winner in a given group in $T$ trials} } \nonumber \\
\label{eq:lower2}&\quad\leq \log_2(\frac{n}{m}) \;\;\E{\text{Loss of winner of two experts in $\frac{T}{\log_2(\frac{n}{m})}$ trials} }.
\end{align}
We bound the last expectation that deals with the 2 experts
case by applying Lemma \ref{lem:ex} with $p=1/n$ and $T/\log_2(n/m)$.
This lets us replace the expectation by the upper bound
\[\frac{T}{\log_2(n/m)n} - \sqrt{\frac{T}{\log_2(n/m)n}}.\]
Plugging this into \eqref{eq:lower2} gives a $T/n - \sqrt{\log_2(n/m) T/n}$ upper bound on the expected loss of a winner in a given group.
We upper bound the expected loss of the comparator by the total loss of 
$m$ winners from the $m$ groups, 
which in expectation is at most $mT/n -  m\sqrt{\log_2(n/m) T/n}$.

Finally the loss of the algorithm is bounded as follows. 
Since every expert suffers loss $1/n$ in expectation at
each trail and losses are i.i.d. between
trials, any online algorithm suffers loss at least $mT/n$.
This concludes the proof. \qed
\end{proof}

Combining this lower bound with the upper bounds proved 
in Section \ref{sec:eg} on
the regret of Loss MEG for the sparse instance case 
results in the following corollary:
\begin{corollary}
For online PCA with sparse instances, the regret 
$\Theta(m\sqrt{\frac{T\ln(n/m)}{n}})$ 
of Loss MEG is within a constant factor of the minimax regret.
\end{corollary}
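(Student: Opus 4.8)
The plan is to read off both directions of the $\Theta$ from results already established: the upper bound is exactly the Loss~MEG bound of Theorem~\ref{th:T_MEG}, and the matching lower bound is assembled from Theorems~\ref{th:sparse1} and~\ref{th:sparse2}, split into the regimes $k\le n/2$ and $k\ge n/2$. First I would record the upper direction. By \eqref{eq:regret_bound_T_MEG}, Loss~MEG has regret at most $m\sqrt{\tfrac{2T}{n}\log\tfrac{n}{m}}+m\log\tfrac{n}{m}$; the additive term is dominated by the main term once $\sqrt{\tfrac{T}{n}\log\tfrac{n}{m}}\ge\log\tfrac{n}{m}$, i.e.\ once $T\ge n\log\tfrac{n}{m}$, so in that range the regret is $O\bigl(m\sqrt{T\log(n/m)/n}\bigr)$. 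In the regime $k\le n/2$ the weaker requirement $T\ge k$ already suffices, since there $m\log\tfrac{n}{m}\le k\le\sqrt{kT}$.

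For the lower direction, when $k\ge n/2$ Theorem~\ref{th:sparse2} gives exactly $\Omega\bigl(m\sqrt{\log(n/m)T/n}\bigr)$ under its hypothesis $T\ge n\log_2(n/m)$, which is the claimed rate verbatim (up to the harmless $\log$ versus $\log_2$ constant). When $k\le n/2$, Theorem~\ref{th:sparse1} delivers $\Omega(\sqrt{kT})$, so the remaining work is purely to check that $\sqrt{kT}$ and $m\sqrt{\log(n/m)T/n}$ agree up to constants in this regime. Here $m=n-k\ge n/2$, so $m=\Theta(n)$, and $\tfrac{k}{m}\le 1$, so $\log\tfrac{n}{m}=\log(1+\tfrac{k}{m})=\Theta(\tfrac{k}{m})=\Theta(\tfrac{k}{n})$. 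Substituting, $m\sqrt{\log(n/m)T/n}=\Theta\bigl(n\sqrt{\tfrac{k}{n}\cdot\tfrac{T}{n}}\bigr)=\Theta(\sqrt{kT})$, matching the Theorem~\ref{th:sparse1} bound.

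Combining the two directions in each regime yields minimax regret $\Theta\bigl(m\sqrt{T\log(n/m)/n}\bigr)$, and since this is precisely the Loss~MEG rate the corollary follows. The only non-mechanical point is the reconciliation in the $k\le n/2$ case: the two theorems are phrased with cosmetically different rates, and one must invoke the \emph{two-sided} estimate $\log(1+x)=\Theta(x)$ for $x\in(0,1]$ together with $m=\Theta(n)$ to see they name the same quantity. The one-sided version of this estimate is already recorded after Theorem~\ref{th:sparse1}; I would make both inequalities explicit so that the $\Theta$ (and not merely $O$ or $\Omega$) is justified. I would also state the combined hypothesis under which the result holds, namely $T\ge k$ when $k\le n/2$ and $T\ge n\log_2(n/m)$ when $k\ge n/2$, since the $\Theta$ requires the MEG additive term $m\log\tfrac{n}{m}$ to be of lower order.
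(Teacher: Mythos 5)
Your proposal is correct and follows essentially the same route as the paper, which obtains this corollary by combining the Loss MEG upper bound of Theorem~\ref{th:T_MEG} with the lower bounds of Theorems~\ref{th:sparse1} (for $k\le n/2$) and~\ref{th:sparse2} (for $k\ge n/2$), reconciling the two rates via $m=\Theta(n)$ and $\ln(n/m)=\ln(1+k/m)=\Theta(k/m)$ exactly as you do. Your explicit bookkeeping of the conditions on $T$ and of the two-sided estimate $\log(1+x)=\Theta(x)$ is a welcome sharpening of details the paper leaves implicit (it records only the one-sided inequality $m\sqrt{\ln(n/m)T/n}\le\sqrt{kT}$ after Theorem~\ref{th:sparse1}), but it is not a different argument.
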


\subsection{Lower bound for PCA with dense instances}
\label{sec:dense_settings}
The following lower bound again employs Lemma \ref{lem:log}
which was proved using a tournament.
\begin{theorem}
\label{th:dense} 
For $T\geq \log_2(\frac{d}{\min\{k,m\}})$, in the $T$ trial online PCA
problem with dense instances, any online algorithm 
suffers worst case regret at least
\[
\Omega(m\sqrt{\ln(n/m)T}) \text{ when $m \leq \frac{n}{2}$
}\quad \text{or}\quad \Omega(k\sqrt{\ln(n/k)T}) \text{ when
$m \geq \frac{n}{2}$. }
\]
\end{theorem}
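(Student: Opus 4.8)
The plan is to mirror the proof of Theorem~\ref{th:sparse2}, exploiting the fact that dense instances let many experts incur loss in the same trial; this is exactly what boosts the per-pair loss rate from the value $1/n$ forced in the sparse case up to a constant. Concretely, I would take the distribution $\mathcal{P}$ in which, at every trial, each of the $n$ experts independently suffers a unit loss with probability $q=\tfrac12$. The resulting $\ll_t\in\{0,1\}^n$ is a legal dense instance, since its diagonal instance matrix has eigenvalues in $\{0,1\}$ and hence largest eigenvalue at most one; moreover $\mathcal{P}$ is exchangeable across experts and i.i.d.\ between trials, so Lemma~\ref{lem:log} applies verbatim to any sub-collection of experts. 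Against this $\mathcal{P}$ the algorithm has no edge: every expert has expected per-trial loss $q$, so any $\w_t\in\S_m$ incurs expected loss $\sum_i w_{t,i}\,q=mq$ per trial independently of the past, whence $\min_A\E{L_A}=mqT$, which by \eqref{eq:lower1} is all I need from the algorithm side.

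For the comparator I would split into the two regimes. When $m\le\tfrac n2$, partition the experts into $m$ groups of size $g=n/m=2^{S}$ with $S=\log_2(n/m)$, and let the comparator keep the minimum-loss expert of each group (a valid $m$-set). Lemma~\ref{lem:log} bounds the expected minimum loss in a group by $S$ times the expected loss of the winner of a two-expert game over $T'=T/S$ trials, where the winner is now the \emph{minimum} of two independent $\mathrm{Bin}(T',q)$ variables. When $m\ge\tfrac n2$ (so $k\le\tfrac n2$), I would instead partition into $k$ groups of size $n/k=2^{S'}$, $S'=\log_2(n/k)$, and observe that the comparator may \emph{drop} the $k$ largest-loss experts; the sum of the $k$ group-\emph{maxima} lower bounds the total loss of the dropped set, so running Lemma~\ref{lem:log} on the complementary losses $\widetilde{\ll}_t=\one-\ll_t$ (still i.i.d.\ and exchangeable) converts the statement about group minima into the needed statement about group maxima, yielding expected group maximum at least $Tq+\Omega(\sqrt{\log_2(n/k)\,T})$.

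The one genuinely new ingredient is the two-expert estimate that replaces Lemma~\ref{lem:ex}: for independent $X,Y\sim\mathrm{Bin}(T',q)$ with constant $q$, I need $\E{\min(X,Y)}\le T'q-c\sqrt{T'}$ and symmetrically $\E{\max(X,Y)}\ge T'q+c\sqrt{T'}$. This follows from $\E{\min(X,Y)}=T'q-\tfrac12\E{|X-Y|}$ together with an anti-concentration bound $\E{|X-Y|}\ge c'\sqrt{\mathrm{Var}(X-Y)}=\Omega(\sqrt{T'})$ for the centered binomial difference (via Paley--Zygmund or a second/fourth-moment comparison); the hypothesis $T\ge\log_2(n/\min\{k,m\})$ ensures $T'\ge1$ so the estimate is non-vacuous. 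Summing the per-group gap $c\,S\sqrt{T'}=c\sqrt{\log_2(n/m)\,T}$ over the $m$ groups bounds the comparator's expected loss by $mqT-\Omega(m\sqrt{\log_2(n/m)\,T})$, and the dropped-set version over $k$ groups gives $mqT-\Omega(k\sqrt{\log_2(n/k)\,T})$; subtracting from $\min_A\E{L_A}=mqT$ in \eqref{eq:lower1} yields the two claimed bounds. The main obstacle I anticipate is precisely this two-expert anti-concentration estimate: everything else is bookkeeping around Lemma~\ref{lem:log}, but obtaining a clean non-asymptotic $\Omega(\sqrt{T'})$ lower bound on $\E{|X-Y|}$, uniformly over the relevant range of $T'$, is the delicate step, just as Lemma~\ref{lem:ex} was in the sparse case.
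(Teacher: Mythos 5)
Your proposal follows essentially the same route as the paper's (sketch) proof: the same distribution where each expert independently suffers a unit loss with probability $1/2$, the same $mT/2$ bound on any algorithm's expected loss, the same partition into $m$ groups (taking group winners) when $m\le n/2$, and the same flip of Lemma~\ref{lem:log} to group ``losers'' over $k$ groups when $m\ge n/2$. The only difference is one of detail: you make explicit the two-expert binomial estimate $\E{\min(X,Y)}\le T'q - c\sqrt{T'}$ via anti-concentration of $\E{|X-Y|}$, which the paper's sketch uses implicitly as the dense-case replacement for Lemma~\ref{lem:ex}.
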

\begin{proof}
The proof is similar to the proof of Theorem \ref{th:sparse2}, 
except that at each trial, unit losses are independently given to all the experts with probability $\frac{1}{2}$.
For such a distribution over losses, any algorithm suffers cumulative loss 
at least $mT/2$ in expectation.
We now upper bound the comparator's expected loss by distinguishing two cases:
When $m\leq n/2$, we group the experts into $m$ groups and upper bound the 
comparator loss using the $m$ winners, one from each of the groups.
This gives an $mT/2 - cm\sqrt{\ln(n/m)T}$ upper bound, 
and results in a $\Omega(m\sqrt{\ln(n/m)T})$ lower bound for the regret.

When $m\geq n/2$, we group the experts into $k$ groups and consider a {\em loser} 
out of each group, i.e. the expert which suffers the {\em
largest} loss in each group.
One can flip around the content of Lemma \ref{lem:log} to show that the loser's loss in a group of $n/k$ experts is lower bounded by 
$ T/2 + c\sqrt{\ln(n/k)T}$, so that the expected loss of
all $k$ losers is lower bounded by $kT/2 + ck\sqrt{\ln(n/k)T}$.
The claimed regret bounds now follows from the fact that the cumulative loss of the comparator is upper bounded by the total expected loss of all experts ($nT/2$) minus
the total expected loss of all $k$ losers. 
This completes the sketch proof.\qed
\end{proof}
Combining this lower bound with the upper bounds on the
regret of Loss MEG and Gain MEG for dense instance case proved in Section \ref{sec:online} gives a following corollary,
which basically states that the Loss MEG is optimal for $m \leq \frac{n}{2}$ 
while the Gain MEG is optimal for $m \geq \frac{n}{2}$.
\begin{corollary}
Consider online PCA with dense instances.
\begin{itemize}
 \item When $m \leq \frac{n}{2}$, the regret $\Theta(m\sqrt{T\log \frac{n}{m}})$ of Loss MEG is within a constant factor of the minimax regret.
 \item When $m \geq \frac{n}{2}$, 
the regret $\Theta(k\sqrt{T\log \frac{n}{k}})$ of Gain MEG is within a constant factor of the minimax regret.
\end{itemize}
\end{corollary}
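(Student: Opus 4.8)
The plan is to obtain the corollary as a direct consequence of the matching upper and lower bounds already established, treating the two regimes $m\le\frac{n}{2}$ and $m\ge\frac{n}{2}$ separately. In each regime the upper bound comes from the relevant MEG variant in Theorem~\ref{th:T_MEG}, while the lower bound on the minimax regret comes from Theorem~\ref{th:dense}. Since the worst-case regret of any fixed algorithm is at least the minimax regret, the lower bound transfers automatically to the MEG variant under consideration, so it remains only to check that its upper bound is of the same order; the two bounds then pin the regret down to $\Theta(\cdot)$ and simultaneously certify that it is within a constant factor of the minimax value.

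For $m\le\frac{n}{2}$ I would invoke the dense-instance Loss MEG bound $m\sqrt{2T\log\frac{n}{m}}+m\log\frac{n}{m}$ from Theorem~\ref{th:T_MEG}. The leading term is already $O(m\sqrt{T\log\frac{n}{m}})$, so the only point to verify is that the additive term $m\log\frac{n}{m}$ is of lower order. This follows from the time-horizon hypothesis of Theorem~\ref{th:dense}: when $m\le\frac{n}{2}$ we have $\min\{k,m\}=m$, so the hypothesis reads $T\ge\log_2\frac{n}{m}$, giving $\log\frac{n}{m}\le T$ and hence $m\log\frac{n}{m}\le m\sqrt{T\log\frac{n}{m}}$. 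Combining the resulting $O(m\sqrt{T\log\frac{n}{m}})$ upper bound with the $\Omega(m\sqrt{T\log\frac{n}{m}})$ lower bound of Theorem~\ref{th:dense} yields the claimed $\Theta(m\sqrt{T\log\frac{n}{m}})$.

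For $m\ge\frac{n}{2}$ the argument is cleaner still, since the Gain MEG bound $k\sqrt{2T\log\frac{n}{k}}$ of Theorem~\ref{th:T_MEG} carries no additive lower-order term and already matches the lower bound $\Omega(k\sqrt{T\log\frac{n}{k}})$ of Theorem~\ref{th:dense} up to the constant $\sqrt{2}$. Throughout I would note that replacing $\ln$ by $\log_2$ rescales every quantity only by the constant $\ln 2$, which is harmless inside the $\Theta$ and $\Omega$ notation. The only genuine, and very mild, obstacle is confirming that the additive $m\log\frac{n}{m}$ term in the Loss MEG bound is dominated by the main term under the stated lower bound on $T$; everything else is a transparent comparison of matching expressions.
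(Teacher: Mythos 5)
Your proposal is correct and follows essentially the same route as the paper, which obtains the corollary by directly combining the dense-instance upper bounds of Theorem~\ref{th:T_MEG} with the minimax lower bounds of Theorem~\ref{th:dense}, noting that the lower bound applies to every algorithm and hence both to MEG and to the minimax value. Your additional check that the additive $m\log\frac{n}{m}$ term is dominated by the leading term under the hypothesis $T\ge\log_2\frac{n}{m}$ is exactly the small detail the paper leaves implicit, and your handling of it (and of the $\ln$ versus $\log_2$ constant) is sound.
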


\paragraph{Minimax regret for sequences with a budget.} 
One can also show the minimax regret for a prediction game in which the budget $B$ is fixed, rather than the time horizon $T$.
In this case, no matter if the instances are dense or sparse,
we get the following corollary establishing the optimality
of Loss MEG:
\begin{corollary}
\label{cor:budget}
Let $B\geq m\log_2\frac{n}{m}$. For online PCA with both
sparse and dense instances, the regret $\Theta(\sqrt{m\ln(n/m)B})$ of Loss MEG is within a constant factor the minimax budget regret.
\end{corollary}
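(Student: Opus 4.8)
For the upper bound I would simply invoke the Loss MEG budget bound \eqref{eq:regret_bound_B_MEG}, namely $\sqrt{2Bm\log\frac{n}{m}}+m\log\frac{n}{m}$, and observe that the hypothesis $B\geq m\log_2\frac{n}{m}$ forces $m\log\frac{n}{m}\leq\sqrt{Bm\log\frac{n}{m}}$, so the additive term is absorbed into the square-root term and the whole bound collapses to $O(\sqrt{m\ln(n/m)B})$. Since \eqref{eq:regret_bound_B_MEG} was stated for arbitrary instance sequences obeying the budget, this half works verbatim for both the sparse and the dense case. The matching $\Theta$ for Loss MEG then follows automatically once the minimax lower bound is in place, as Loss MEG is one of the algorithms over which the minimax is taken.

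For the lower bound the plan is to exploit the reverse implication flagged in the introduction: each time-dependent lower bound already contains a budget-dependent one. The constructions behind Theorems \ref{th:sparse1}, \ref{th:sparse2} and \ref{th:dense} fix a horizon $T$, pin the expected best-$m$-set loss $\mathcal{L}_C$ to a known value, and force regret $\Omega(m\sqrt{\ln(n/m)T/n})$ in the sparse case and $\Omega(m\sqrt{\ln(n/m)T})$ in the dense case. I would simply re-parametrise by the budget. In Theorem \ref{th:sparse2} (sparse, $k\geq n/2$) the comparator loss is $\mathcal{L}_C=\Theta(mT/n)$, so setting $B:=\mathcal{L}_C$ gives $T=\Theta(nB/m)$ and substitution yields $\Omega(m\sqrt{\ln(n/m)\cdot B/m})=\Omega(\sqrt{m\ln(n/m)B})$. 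In Theorem \ref{th:dense} (dense, $m\leq n/2$) the comparator loss is $\mathcal{L}_C=\Theta(mT)$, so $T=\Theta(B/m)$ and the identical substitution again gives $\Omega(\sqrt{m\ln(n/m)B})$. The regime $k\leq n/2$ of Theorem \ref{th:sparse1} is handled the same way after noting $m\log\frac{n}{m}=\Theta(k)$ there, so its $\Omega(\sqrt{kT})$ bound with comparator loss $\Theta(T)$ also reduces to $\Omega(\sqrt{m\ln(n/m)B})$. Note that $B\geq m\log_2\frac{n}{m}$ is exactly what makes the induced horizon $T=\Theta(nB/m)$ large enough to meet the length requirement $T\geq n\log_2(n/m)$ of Theorem \ref{th:sparse2}.

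The step I expect to be the real obstacle is that the budget is a \emph{hard} upper bound on the comparator's loss, whereas the time-dependent constructions only control $\mathcal{L}_C=\E[L_C]$; the random sequence may occasionally produce $L_C>B$, and such sequences are not admissible in the budget game. To close this gap I would choose the horizon so the expected comparator loss sits strictly below the budget, say $\mathcal{L}_C(T)=B-\Theta(\sqrt{m\ln(n/m)B})$, and then use the independence of the per-trial losses to show that $L_C$ concentrates around $\mathcal{L}_C$, so $L_C\leq B$ holds with probability arbitrarily close to one. Conditioning the distribution $\mathcal{P}$ of \eqref{eq:lower1} on this high-probability event perturbs both $\E[L_A]$ and $\E[L_C]$ only by lower-order terms, which can be shown to be dominated by the $\Omega(\sqrt{m\ln(n/m)B})$ regret. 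Verifying that this conditioning is benign — that the surviving sequences all lie in the budget-$B$ set while still forcing the full regret through the conditioned version of \eqref{eq:lower1} — is where the bulk of the careful work lies.
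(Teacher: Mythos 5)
Your upper-bound half and your re-parametrisation $T=\Theta(nB/m)$ are exactly what the paper does (it takes $T=\frac{n}{m}B$, and the hypothesis $B\geq m\log_2\frac{n}{m}$ serves precisely to meet the length requirement $T\geq n\log_2(n/m)$ of Theorem \ref{th:sparse2}). But the step you single out as ``the real obstacle'' --- that the budget is a hard constraint while the random constructions only control the \emph{expected} comparator loss --- is not an obstacle at all for the sparse constructions, and the paper needs no concentration or conditioning argument. In the constructions of Theorems \ref{th:sparse1} and \ref{th:sparse2} every loss vector is a standard basis vector, so in \emph{every realization} the total loss handed out over $T$ trials is at most $T$; the best $m$-set picks the $m$ smallest of $n$ cumulative losses summing to at most $T$, hence its loss is at most $\frac{m}{n}T=B$ \emph{surely}, not merely in expectation. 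This one deterministic observation is the paper's entire justification (``since the produced sequence is sparse and has length $\frac{n}{m}B$, its budget is at most $B$''), and it makes every sequence in the support of $\mathcal{P}$ budget-feasible, so the Yao-type bound \eqref{eq:lower1} directly yields a feasible sequence with the claimed regret.

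The place where your concern \emph{would} be genuine is your invocation of Theorem \ref{th:dense}: under the dense construction (independent unit losses with probability $\tfrac12$) the comparator's loss really can exceed any budget pinned to its expectation, and your proposed fix --- conditioning $\mathcal{P}$ on the event $L_C\leq B$ --- is the unproven bulk of your plan and is delicate, since conditioning destroys the i.i.d.\ structure used to lower-bound the algorithm's loss by $mT/2$, and controlling the error term requires tail bounds strong enough to beat a factor of order $mT$. But this entire branch is unnecessary: the corollary's dense claim follows from the sparse one, because sparse instances (trace at most one, hence largest eigenvalue at most one) are admissible dense instances, so the sparse lower bound is also a lower bound on the dense minimax budget regret, while the upper bound \eqref{eq:regret_bound_B_MEG} already holds for dense sequences. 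In short: your outline is the paper's, but you missed the deterministic budget-feasibility of the sparse construction, and as written your proof has a gap exactly where you expected to do the careful work.
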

\begin{proof}
Since the instance matrices have eigenvalues bounded by one, 
the minimax regret is upper bounded by 
 $O(\sqrt{m\ln(n/m)B})$, the regret bound 
of the Loss MEG algorithm given in 
\eqref{eq:regret_bound_B_MEG}, Section \ref{sec:eg}.
On the other hand, we now reason that for any algorithm 
we can construct a sparse instance sequence 
of budget $B$ incurring regret at least $\Omega(\sqrt{m\ln(n/m)B})$.
This instance sequence is constructed via Theorem
\ref{th:sparse1} and Theorem \ref{th:sparse2}:
For any algorithm, these theorems provide 
a sparse instance sequence of length $T$ 
with regret at least $\Omega(m\sqrt{\frac{T\ln(n/m)}{n}})$.
We apply these theorems with $T=\frac{n}{m}B \geq n\log_2\frac{n}{m}$.
Since the produced sequence is sparse and has length $\frac{n}{m}B$, 
its budget is at most $B$.
Finally plugging $T=\frac{n}{m}B$ into the regret bounds
guaranteed by the theorems results in the regret 
$\Omega(\sqrt{m\ln(n/m)B})$.
\qed
\end{proof}

\section{Conclusion}
\label{sec:conclusion}
We showed in this paper that GD is non-optimal for various problems. 
However, our lower bounds are for the Mirror Descent version of GD 
that trades off the loss on the last example with a divergence 
to the last capped parameter matrix.
There is an alternate algorithm: the Incremental Off-line 
\cite{aw} or Follow the Perturbed Leader algorithm \cite{fpl} 
that in its motivation trades off the total loss on all
examples against the 
divergence to the initial distribution.
Note that both versions follow their update 
with a projection into the parameter space.
We conjecture that the Incremental Off-line version of GD is strictly better
than the commonly studied Mirror Descent version.
The advantage of processing all examples versus
just the last one has now shown up in a number of different contexts:
in Boosting it led to better algorithms \cite{totcorr} 
and it also was crucial for obtaining a
kernelizable online PCA algorithm \cite{kpca}.
When there are only equality constraints
and the loss is linear, then the two versions of the
algorithm are provably the same (See e.g. \cite{perm}).
However when there are inequality constraints that are not
enforced by the divergence, then the projection steps of the
Mirror Descent version of the algorithm ``forgets''
information about the past examples whenever the algorithm
runs into the boundaries of the inequality constraints.

More concretely, we conjecture that the Incremental Off-line
version of GD has the optimal budget regret bound for
online PCA (as Mirror Descent MEG does which enforces
the non-negativity constraints with its divergence). 
If this conjecture is true, then this would be the first case where
there is a provable gap between processing just the last versus
all past examples. If the conjecture is false, then Mirror
Descent MEG is truly better than both versions of GD.
Both outcomes would be in important step forward in our
understanding of online algorithms.
Note that our $k\sqrt{\budget}$ lower bound for 
GD specifically exploits the forgetting effect
and consequently only applies
to the Mirror Descent version of the GD algorithm.





\bibliographystyle{sty/splncs03}
\bibliography{pap}
\end{document}